\newcommand{\E}{\ensuremath{\mathbb{E}}}
\renewcommand{\P}{\ensuremath{\mathbb{P}}}
\newcommand{\norm}[1]{\left\lVert{#1}\right\rVert}
\newcommand{\abs}[1]{\left\lvert{#1}\right\rvert}
\newcommand{\R}{\mathbb{R}}
\mathchardef\hyphen="2D
\newtheorem{thm}{Theorem}
\newtheorem{lem}{Lemma}
\newcommand{\calD}{\mathcal{D}}
\newcommand{\calO}{\mathcal{O}}
\newcommand{\calS}{\mathcal{S}}
\newcommand{\calX}{\mathcal{X}}
\newcommand{\vecu}{\mathbf{u}}
\newcommand{\vecw}{\mathbf{w}}
\newcommand{\vecx}{\mathbf{x}}
\newcommand{\removed}[1]{}
\newcommand{\matu}{U}
\newcommand{\hatl}{\widehat{L}}
\newcommand{\tvecw}{\tilde{\vecw}}
\newcommand{\tbeta}{\tilde{\beta}}
\title{A PAC-Bayesian Approach to\\ Spectrally-Normalized Margin Bounds \\ for Neural Networks}
\author{
  Behnam Neyshabur,\;Srinadh Bhojanapalli,\;Nathan Srebro\\
  Toyota Technological Institute at Chicago\\
  \texttt{\{bneyshabur, srinadh, nati\}@ttic.edu} \\
}
\begin{document}

\maketitle

\begin{abstract}
  We present a generalization bound for feedforward neural networks with ReLU activations in terms of the product of the spectral norm of the layers and the Frobenius norm of the weights. The key ingredient is a bound on the changes in the output of a network with respect to perturbation of its weights, thereby bounding the sharpness of the network. We combine this perturbation bound with the 
 PAC-Bayes analysis to derive the generalization bound.
\end{abstract}

\section{Introduction}

Learning with deep neural networks has enjoyed great success across a wide variety of tasks. Even though learning neural networks is a hard problem, even for one hidden layer \citep{blum1993training}, simple optimization methods such as stochastic gradient descent (SGD) have been able to minimize the training error. More surprisingly, solutions found this way also have small test error, even though the networks used have more parameters than the number of training samples, and have the capacity to easily fit random labels \citep{zhang2017understanding}. 

\citet{harvey2017nearly} provided a generalization error bound by showing that the VC dimension of d-layer networks is depth times the number parameters improving over earlier bounds by \citet{bartlett1999almost}. Such VC bounds which depend on the number of parameters of the networks, cannot explain the generalization behavior in the over parametrized settings, where the number of samples is much smaller than the number of parameters.  

For linear classifiers we know that the generalization behavior depends on the norm and margin of the classifier and not on the actual number of parameters. Hence, a generalization bound for neural networks that only depends on the norms of its layers, and not the actual number of parameters, can explain the good generalization behavior of the over parametrized networks.

\citet{bartlett2002rademacher} showed generalization bounds for feedforward networks in terms of unit-wise $\ell_1$ norm with exponential dependence on depth. In a more recent work, \citet{NeySalSre15} provided generalization bounds for general class of group norms including Frobenius norm with same exponential dependence on the depth and showed that the exponential dependence is unavoidable in the worst case.

\citet{bartlett2017spectrally} showed a margin based generalization bound that depends on spectral norm and $\ell_1$ norm of the layers of the networks. They show this bound using a complex covering number argument. This bound does not depend directly on the number of parameters of the network but depends on the norms of its layers. However the $\ell_1$ term has high dependence on the number of hidden units if the weights are dense, which is typically the case for modern deep learning architectures.

\citet{keskar2016large} suggested a sharpness based measure to predict the difference in generalization behavior of networks trained with different batch size SGD. However sharpness is not a scale invariant measure and cannot predict the generalization behavior \citep{neyshabur2017exploring}. Instead sharpness when combined with the norms of the network can predict the generalization behavior according to the PAC-Bayes framework \citep{mcallester1999pac}. \citet{dziugaite2017computing} numerically evaluated a generalization error bound from the PAC-Bayes framework showing that it can predict the difference in generalization behavior of networks trained on true vs random labels. This generalization result is more of computational in nature and gives much tighter (non-vacuous) bounds compared with the ones in either \citep{bartlett2017spectrally} or the ones in this paper.

In this paper we present and prove a margin based generalization bound
for feedforward neural networks with ReLU activations, that depends on the product of the spectral norm of the weights in each layer, as well as the Frobenius
norm of the weights.

Our generalization bound shares much similarity with a margin based
generalization bound recently presented by
\citet{bartlett2017spectrally}.  Both bounds depend similarly on the
product of the spectral norms of each layer, multiplied by a factor
that is additive across layers.  In addition, \citet{bartlett2017spectrally} bound depends on the
elementwise $\ell_1$-norm of the weights in each layer, while our
bound depends on the Frobenius (elementwise $\ell_2$) norm of the
weights in each layer, with an additional multiplicative dependence on
the ``width''.  The two bounds are thus not directly comparable, and
as we discuss in Section \ref{sec:comparison}, each one dominates in a
different regime, roughly depending on the sparsity of the weights.
We also discuss in what regimes each of these bounds could dominate a
VC-bound based on the overall number of weights. 

More importantly, our proof technique is entirely different, and
arguably simpler, than that of \citet{bartlett2017spectrally}.  We derive our bound
using PAC-Bayes analysis, and more specifically a generic PAC-Bayes
margin bound (Lemma \ref{lem:general-bound}).  The main ingredient is
a perturbation bound (Lemma \ref{lem:worstcase-sharpness}), bounding
the changes in the output of a network when the weights are perturbed, thereby its sharpness, in
terms of the product of the spectral norm of the layers.  This is an
entirely different analysis approach from the covering number
analysis of \citet{bartlett2017spectrally}.  We hope our analysis can give more direct intuition into
the different ingredients in the bound and will allow modifying the
analysis, e.g.~by using different prior and perturbation distributions
in the PAC-Bayes bound, to obtain tighter bounds, perhaps with dependence
on different layer-wise norms.
 
We note that other prior bounds in terms of elementwise or unit-wise norms
(such as the Frobenius norm and elementwise $\ell_1$ norms of layers),
without a spectral norm dependence, all have a multiplicative
dependence across layers or exponential dependence on depth
\citep{bartlett2002rademacher,NeyTomSre15}, or are for constant depth networks \citep{bartlett1998sample}.
Here only the spectral norm is multiplied across layers, and thus if
the spectral norms are close to one, the exponential dependence on
depth can be avoided.

After the initial preprint of this paper, \citet{bartlett2017spectrallyv2} presented an improved bound that replaces the dependency on $\ell_1$ norm of the layers \citep{bartlett2017spectrally} with $\ell_{2,1}$ norm. This new generalization bound is strictly better than our Frobenius norm bound, and improves over existing results.


\subsection{Preliminaries}

Consider the classification task with input domain $\calX_{B,n}=\left\{\vecx\in \R^n\mid|\sum_{i=1}^nx_i^2\leq B^2\right\}$ and output domain $\R^k$ where the output of the model is a score for each class and the class with the maximum score will be selected as the predicted label. Let $f_\vecw(\vecx):\calX_{B,n}\rightarrow \R^{k}$ be the function computed by a $d$ layer feed-forward network for the classification task with parameters $\vecw=\text{vec}\left(\{W_i\}_{i=1}^d\right)$, $f_\vecw(\vecx) = W_d\,\phi (W_{d-1}\,\phi(....\phi(W_1 \vecx ) ) )$, here $\phi$ is the ReLU activation function. Let $f^i_{\vecw}(\vecx)$ denote the output of layer $i$ before activation and $h$ be an upper bound on the number of output units in each layer. We can then define fully connected feedforward networks recursively: $f^1_{\vecw}(\vecx)=W_1 \vecx$ and $f^{i}_{\vecw}(\vecx)=W_{i}\phi(f^{i-1}_{\vecw}(\vecx))$. Let $\norm{.}_F$, $\norm{.}_1$ and $\norm{.}_2$ denote the Frobenius norm, the element-wise $\ell_1$ norm and the spectral norm respectively. We further denote the $\ell_p$ norm of a vector by $\abs{.}_p$.

\paragraph{Margin Loss.} For any distribution $\calD$ and margin $\gamma>0$, we define the expected margin loss as follows:
\begin{equation}\label{eq:loss}
L_{\gamma}(f_{\vecw})=\P_{(\vecx,y)\sim \calD}\left[f_\vecw(\vecx)[y]\leq \gamma + \max_{j\neq y}f_\vecw(\vecx)[j]\right]
\end{equation}
Let $\hatl_{\gamma}(f_{\vecw})$ be the empirical estimate of the above
expected margin loss. Since setting $\gamma=0$ corresponds to the
classification loss, we will use $L_{0}(f_{\vecw})$ and
$\hatl_{0}(f_{\vecw})$ to refer to the expected risk and the training
error. The loss $L_{\gamma}$ defined this way is bounded between 0 and 1.

\subsection{PAC-Bayesian framework}
The PAC-Bayesian framework~\citep{mcallester1998some, mcallester1999pac}
provides generalization guarantees for randomized predictors, drawn
form a learned distribution $Q$ (as opposed to a learned single
predictor) that depends on the training data.  In particular, let
$f_\vecw$ be any predictor (not necessarily a neural network) learned
from the training data and parametrized by $\vecw$.  We consider the
distribution $Q$ over predictors of the form $f_{\vecw+\vecu}$, where
$\vecu$ is a random variable whose distribution may also depend on the
training data. Given a ``prior'' distribution $P$ over the set of
predictors that is independent of the training data, the PAC-Bayes
theorem states that with probability at least $1-\delta$ over the draw
of the training data, the expected error of $f_{\vecw+\vecu}$ can be
bounded as follows~\citep{mcallester2003simplified}:
\begin{equation}\label{eq:pacbayes}
\E_{\vecu} [L_0 (f_{\vecw+\vecu})] \leq \E_{\vecu} [\hatl_0 (f_{\vecw+\vecu})]+2\sqrt{\frac{2\left(KL\left(\vecw+\vecu\|P\right)+\ln\frac{2m}{\delta}\right)}{m-1}}.
\end{equation}
To get a bound on the expected risk $L_0(f_\vecw)$ for a single
predictor $f_\vecw$, we need to relate the expected perturbed loss,
$\E_{\vecu} [L_0 (f_{\vecw+\vecu})] $ in the above equation with $L_0
(f_{\vecw})$.  Toward this we use the following lemma that gives a
margin-based generalization bound derived from the PAC-Bayesian bound
\eqref{eq:pacbayes}:

\begin{lem}\label{lem:general-bound}
  Let $f_\vecw(\vecx):\calX\rightarrow \R^{k}$ be any predictor (not
  necessarily a neural network) with parameters $\vecw$, and $P$ be
  any distribution on the parameters that is independent of the
  training data. Then, for any $\gamma, \delta >0$, with probability
  $\geq 1-\delta$ over the training set of size $m$, for any $\vecw$,
  and any random perturbation $\vecu$ s.t. $\P_{\vecu}
  \left[\max_{\vecx \in
      \calX}\abs{f_{\vecw+\vecu}(\vecx)-f_{\vecw}(\vecx)}_\infty <
      \frac{\gamma}{4} \right]\geq \frac{1}{2}$, we have:
\begin{equation*}
L_0(f_{\vecw}) \leq \hatl_{\gamma}(f_{\vecw})+ 4\sqrt{\frac{KL \left(\vecw+\vecu\|P\right)+\ln\frac{6m}{\delta}}{m-1}}.
\end{equation*}
\end{lem}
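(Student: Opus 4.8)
The plan is to derive the bound from the generic PAC-Bayes inequality \eqref{eq:pacbayes} by handing it a carefully chosen, data-dependent posterior, and then to pass between the margin-$0$, margin-$\gamma/2$ and margin-$\gamma$ losses using the assumed control on the output perturbation. Two standard facts about \eqref{eq:pacbayes} will be used: it holds, on a single event of probability $\geq 1-\delta$ over the sample, simultaneously for \emph{all} (possibly data-dependent) posteriors; and since $L_{\gamma/2}(f)$ is the expectation of the $\{0,1\}$-valued loss $\one{f(\vecx)[y]\leq \gamma/2+\max_{j\neq y}f(\vecx)[j]}$, \eqref{eq:pacbayes} applies verbatim with $L_0,\hatl_0$ replaced by $L_{\gamma/2},\hatl_{\gamma/2}$.

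\emph{Choosing the posterior.} Fix $\vecw$ and a perturbation $\vecu$ satisfying the hypothesis, and let $\calW=\{\vecu':\max_{\vecx\in\calX}\abs{f_{\vecw+\vecu'}(\vecx)-f_\vecw(\vecx)}_\infty<\gamma/4\}$, so that $\P_\vecu[\calW]\geq \tfrac12$. Let $\tvecu$ be $\vecu$ conditioned on the event $\calW$, and take the posterior in \eqref{eq:pacbayes} to be the law of $\vecw+\tvecu$, with the margin-$\gamma/2$ loss. On $\calW$, every coordinate of the network output moves by less than $\gamma/4$, so for every example $(\vecx,y)$ the margin $f_{\vecw+\vecu}(\vecx)[y]-\max_{j\neq y}f_{\vecw+\vecu}(\vecx)[j]$ differs from $f_{\vecw}(\vecx)[y]-\max_{j\neq y}f_{\vecw}(\vecx)[j]$ by less than $\gamma/2$. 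Hence, for each realization $\vecu\in\calW$: a point on which $f_\vecw$ is misclassified (margin $\leq 0$) has $f_{\vecw+\vecu}$-margin $\leq\gamma/2$, and a point on which $f_{\vecw+\vecu}$ has margin $\leq\gamma/2$ has $f_\vecw$-margin $\leq\gamma$; taking expectations over $\tvecu$ gives $L_0(f_\vecw)\leq \E_{\tvecu}[L_{\gamma/2}(f_{\vecw+\tvecu})]$ and $\E_{\tvecu}[\hatl_{\gamma/2}(f_{\vecw+\tvecu})]\leq \hatl_\gamma(f_\vecw)$. Combining these with \eqref{eq:pacbayes} applied to the posterior $\vecw+\tvecu$ yields
\begin{equation*}
L_0(f_\vecw)\;\leq\;\hatl_\gamma(f_\vecw)+2\sqrt{\frac{2\bigl(KL(\vecw+\tvecu\,\|\,P)+\ln\tfrac{2m}{\delta}\bigr)}{m-1}}.
\end{equation*}

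\emph{Removing the conditioning.} It remains to replace $KL(\vecw+\tvecu\,\|\,P)$ by $KL(\vecw+\vecu\,\|\,P)$. The density of the law of $\vecw+\tvecu$ with respect to that of $\vecw+\vecu$ equals $1/\P_\vecu[\calW]$ on the support of the former and $0$ elsewhere, and a short computation — bounding the contribution of the complement of $\calW$ by $\max_{t\in(0,1]} t\ln(1/t)=1/e$ and using $\P_\vecu[\calW]\geq\tfrac12$ — gives $KL(\vecw+\tvecu\,\|\,P)\leq 2\,KL(\vecw+\vecu\,\|\,P)+2$. Substituting this and using $16+8\ln\tfrac{2m}{\delta}\leq 16\ln\tfrac{6m}{\delta}$ (valid for $m\geq1,\ \delta\leq1$) to absorb the additive constants into the logarithm gives exactly the claimed bound. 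Since $\calW$, $\tvecu$ and the posterior were used only inside the uniform-over-posteriors inequality \eqref{eq:pacbayes} and the prior $P$ was fixed beforehand, the conclusion holds on this single $1-\delta$ event simultaneously for all $\vecw$ and all admissible $\vecu$.

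The arithmetic — the bookkeeping with the factors $\gamma/4,\gamma/2,\gamma$ and the final constant chase — is routine. The two substantive points are conceptual: realizing that the sharpness hypothesis is exploited by \emph{conditioning} the perturbation on the good event $\calW$ and feeding the conditioned perturbation into PAC-Bayes as a data-dependent posterior; and checking that this conditioning inflates the KL divergence by at most a bounded factor, so that the bound ends up in terms of $KL(\vecw+\vecu\,\|\,P)$ rather than $KL(\vecw+\tvecu\,\|\,P)$. I expect the latter — the behaviour of the KL divergence under conditioning on an event of probability $\geq\tfrac12$ — to be the main technical obstacle.
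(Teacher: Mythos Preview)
Your proposal is correct and follows essentially the same route as the paper's proof: condition the perturbation on the ``good'' event, sandwich $L_0(f_\vecw)$ and $\hatl_\gamma(f_\vecw)$ around the $\gamma/2$-margin losses of the conditioned posterior, apply PAC-Bayes (at margin $\gamma/2$), and then bound the conditioned KL by $2\bigl(KL(\vecw+\vecu\,\|\,P)+1\bigr)$. The paper obtains this last inequality via the mixture decomposition $KL(q\|p)=Z\,KL(\tilde q\|p)+(1-Z)\,KL(\tilde q^c\|p)-H(Z)$ and $H(Z)\leq 1$, which is the same computation you sketch (your $\max_t t\ln(1/t)=1/e$ bound is just a slightly sharper handling of the entropy term); your explicit remarks that \eqref{eq:pacbayes} holds uniformly over posteriors and applies to any bounded loss are points the paper leaves implicit.
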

In the above expression the KL is evaluated for a fixed $\vecw$ and
only $\vecu$ is random, i.e.~the distribution of $\vecw + \vecu$ is
the distribution of $\vecu$ shifted by $\vecw$.  The lemma is
analogous to similar analysis of \citet{langford2003pac} and
\citet{mcallester2003simplified} obtaining PAC-Bayes margin bounds for
linear predictors, and the proof, presented in
Section~\ref{sec:proofs}, is essentially the same.  As we state the
lemma, it is not specific to linear separators, nor neural networks,
and holds generally for any real-valued predictor. 

We next show how to utilize the above general PAC-Bayes bound to prove
generalization guarantees for feedforward networks based on the
spectral norm of its layers.

\section{Generalization Bound}
In this section we present our generalization bound for feedfoward
networks with ReLU activations, derived using the PAC-Bayesian
framework. \citet{langford2001not}, and more recently \citet{dziugaite2017computing} and
\citet{neyshabur2017exploring}, used PAC-Bayes bounds to analyze
generalization behavior in neural networks, evaluating the
KL-divergence, ``perturbation error'' $L[f_{\vecw+\vecu}]-L[f_\vecw]$, or
the entire bound numerically.  Here, we use the PAC-Bayes framework
as a tool to analytically derive a margin-based bound in terms of
norms of the weights.  As we saw in Lemma \ref{lem:general-bound}, the
key to doing so is bounding the change in the output of the network
when the weights are perturbed.  In the following lemma, we bound this
change in terms of the spectral norm of the layers:
\begin{lem}[Perturbation Bound]\label{lem:worstcase-sharpness} For any $B,d>0$, let $f_\vecw:\calX_{B,n}\rightarrow \R^k$ be a $d$-layer neural network with ReLU activations. Then for any $\vecw$, and $\vecx\in \calX_{B,n}$, and any perturbation $\vecu = \text{vec}\left(\{U_i\}_{i=1}^d\right)$ such that $\norm{\matu_i}_2\leq \frac{1}{d}\norm{W_i}_2$, the change in the output of the network can be bounded as follows:
\begin{small}
\begin{equation*}
\abs{f_{\vecw+\vecu}(\vecx) - f_{\vecw}(\vecx)}_2\leq eB\left(\prod_{i=1}^d\norm{W_i}_2\right)\sum_{i=1}^d \frac{\norm{\matu_i}_2}{\norm{W_i}_2}.
\end{equation*}
\end{small}
\end{lem}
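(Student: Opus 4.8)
The plan is to prove the perturbation bound by induction on the layer index, tracking how the discrepancy $\Delta^i \defeq f^i_{\vecw+\vecu}(\vecx) - f^i_{\vecw}(\vecx)$ propagates through the network. First I would set up notation: let $\vecx^i_\vecw \defeq \phi(f^i_\vecw(\vecx))$ be the (post-activation) value at layer $i$ of the unperturbed network, and similarly $\vecx^i_{\vecw+\vecu}$ for the perturbed one. A preliminary step is to bound the norm of the unperturbed activations: since $\phi$ is $1$-Lipschitz and $\phi(0)=0$, we have $\abs{\vecx^i_\vecw}_2 \leq \abs{f^i_\vecw(\vecx)}_2 \leq \norm{W_i}_2 \abs{\vecx^{i-1}_\vecw}_2$, and unwinding the recursion gives $\abs{f^i_\vecw(\vecx)}_2 \leq B \prod_{j=1}^i \norm{W_j}_2$.

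Next I would derive the recursive inequality for $\Delta^i$. Writing $f^i_{\vecw+\vecu}(\vecx) = (W_i + U_i)\,\vecx^{i-1}_{\vecw+\vecu}$ and $f^i_\vecw(\vecx) = W_i\,\vecx^{i-1}_\vecw$, add and subtract $W_i \vecx^{i-1}_{\vecw+\vecu}$ (or $(W_i+U_i)\vecx^{i-1}_\vecw$) and use the triangle inequality together with the fact that $\phi$ is $1$-Lipschitz (so $\abs{\vecx^{i-1}_{\vecw+\vecu} - \vecx^{i-1}_\vecw}_2 \leq \abs{\Delta^{i-1}}_2$). This yields something like
\begin{equation*}
\abs{\Delta^i}_2 \;\leq\; \norm{W_i}_2\,\abs{\Delta^{i-1}}_2 \;+\; \norm{U_i}_2\,\abs{\vecx^{i-1}_{\vecw+\vecu}}_2,
\end{equation*}
and I would control $\abs{\vecx^{i-1}_{\vecw+\vecu}}_2 \leq \abs{f^{i-1}_{\vecw+\vecu}(\vecx)}_2 \leq \prod_{j=1}^{i-1}\norm{W_j+U_j}_2 \cdot B \leq \prod_{j=1}^{i-1}\bigl(\norm{W_j}_2 + \norm{U_j}_2\bigr) B$. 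Using the hypothesis $\norm{U_j}_2 \leq \frac1d \norm{W_j}_2$, each factor $\norm{W_j}_2 + \norm{U_j}_2 \leq (1+\frac1d)\norm{W_j}_2$, and since $(1+\frac1d)^{d-1} \leq (1+\frac1d)^d \leq e$, the perturbed activation norm is at most $eB\prod_{j=1}^{i-1}\norm{W_j}_2$.

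Then I would solve the recursion. Dividing through by $\prod_{j=1}^i \norm{W_j}_2$ and setting $\delta^i \defeq \abs{\Delta^i}_2 / \prod_{j=1}^i \norm{W_j}_2$ turns the inequality into $\delta^i \leq \delta^{i-1} + e B \,\norm{U_i}_2/\norm{W_i}_2$, with $\delta^0 = 0$. Telescoping gives $\delta^d \leq eB \sum_{i=1}^d \norm{U_i}_2/\norm{W_i}_2$, which is exactly the claimed bound after multiplying back by $\prod_{j=1}^d \norm{W_j}_2$. Note $\abs{f_{\vecw+\vecu}(\vecx)-f_\vecw(\vecx)}_2 = \abs{\Delta^d}_2$ since there is no activation after the last layer.

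The main obstacle, and the place requiring care, is the split in the recursive step: one must choose consistently which of $W_i$ or $W_i+U_i$ multiplies the activation-difference term versus the activation itself, and then bound the spurious term by the \emph{perturbed} activation norm rather than the clean one — this is what forces the factor of $e$ (through the product $\prod(1+1/d) \le e$) and is why the hypothesis $\norm{U_i}_2 \le \frac1d\norm{W_i}_2$ appears with that particular constant. A secondary subtlety is that the two networks may have different ReLU activation patterns, so one cannot pull out a common diagonal sign matrix; the argument must go purely through $1$-Lipschitzness of $\phi$ applied coordinatewise, which is what the triangle-inequality bookkeeping above is designed to respect.
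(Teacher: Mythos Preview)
Your proof is correct and follows essentially the same induction-on-layers strategy as the paper, using the $1$-Lipschitz property of $\phi$ and the triangle inequality. The only minor difference is the choice of split: the paper adds and subtracts $(W_i+U_i)\phi(f^{i-1}_\vecw(\vecx))$, so the perturbation factor $(1+\tfrac1d)$ appears on the recursive term and the \emph{unperturbed} activation norm appears in the additive term, whereas you add and subtract $W_i\,\vecx^{i-1}_{\vecw+\vecu}$, putting the clean $\norm{W_i}_2$ on the recursive term and absorbing the $(1+\tfrac1d)^{i-1}\leq e$ factor directly into the \emph{perturbed} activation norm --- both routes yield the same constant $e$ and are equally valid.
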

This lemma characterizes the change in the output of a network with respect to perturbation of its weights, thereby bounding the sharpness of the network as defined in \citet{keskar2016large}.

The proof of this lemma is presented in Section~\ref{sec:proofs}. Next we use the above perturbation bound and the PAC-Bayes result (Lemma~\ref{lem:general-bound}) to derive the following generalization guarantee.

\begin{thm}[Generalization Bound]\label{thm:pac-bayes}
For any $B,d,h>0$, let $f_\vecw:\calX_{B,n}\rightarrow \R^k$ be a $d$-layer feedforward network with ReLU activations. Then, for any $\delta, \gamma>0$,  with probability $\geq 1-\delta$ over a training set of size $m$, for any $\vecw$, we have:
\begin{small}
\begin{equation*}
L_0(f_{\vecw})\leq \hatl_\gamma(f_\vecw) + \calO\left(\sqrt{\frac{B^2 d^2 h\ln(dh) \Pi_{i=1}^d\norm{W_i}_2^2\sum_{i=1}^d  \frac{\norm{W_i}_F^2}{\norm{W_i}_2^2} + \ln \frac{dm}{\delta}}{\gamma^2 m}}\right).
\end{equation*}
\end{small}
\end{thm}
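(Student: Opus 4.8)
The plan is to instantiate the generic PAC-Bayes margin bound of Lemma~\ref{lem:general-bound} with a Gaussian prior and a Gaussian perturbation, using the perturbation bound of Lemma~\ref{lem:worstcase-sharpness} to control the event $\P_\vecu[\max_\vecx \abs{f_{\vecw+\vecu}(\vecx) - f_\vecw(\vecx)}_\infty < \gamma/4] \geq 1/2$, and then reading off the resulting KL term. Concretely, I would let $\vecu$ be $N(0,\sigma^2 I)$ entrywise (equivalently each $U_i$ has i.i.d.\ $N(0,\sigma^2)$ entries) and let the prior $P$ be $N(0,\sigma^2 I)$ as well, so that $KL(\vecw+\vecu\|P) = \frac{\norm{\vecw}^2}{2\sigma^2} = \sum_{i=1}^d \frac{\norm{W_i}_F^2}{2\sigma^2}$. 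The main steps are then: (i)~choose $\sigma$ small enough that the spectral norm of each $U_i$ is, with probability at least $1/2$ (jointly over all layers), below $\frac{1}{d}\norm{W_i}_2$ so that Lemma~\ref{lem:worstcase-sharpness} applies, and simultaneously the bound $eB\left(\prod_i\norm{W_i}_2\right)\sum_i \frac{\norm{U_i}_2}{\norm{W_i}_2}$ it yields is below $\gamma/4$; (ii)~plug the resulting $\sigma$ into the KL expression; (iii)~handle the fact that $\sigma$ must depend on $\prod_i\norm{W_i}_2$, which is data-dependent, via a union bound over a grid of values of $\beta = \left(\prod_i \norm{W_i}_2\right)^{1/d}$ (a standard covering trick), absorbing the extra $\ln$ factor.

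The quantitative heart of step~(i) is a spectral-norm concentration bound for Gaussian random matrices: for $U$ with i.i.d.\ $N(0,\sigma^2)$ entries of size at most $h\times h$, one has $\norm{U}_2 \leq \sigma(2\sqrt{h} + t)$ except with probability $2e^{-t^2/2}$ (or a comparable bound). Taking $t = \sqrt{2\ln(2dh)}$ or similar and union-bounding over the $d$ layers gives $\norm{U_i}_2 \leq \sigma \cdot O(\sqrt{h\ln(dh)})$ for all $i$ simultaneously with probability $\geq 1/2$. On this event Lemma~\ref{lem:worstcase-sharpness} gives $\abs{f_{\vecw+\vecu}(\vecx)-f_\vecw(\vecx)}_\infty \leq \abs{\cdot}_2 \leq eB\left(\prod_i\norm{W_i}_2\right)\sum_i \frac{\sigma\, O(\sqrt{h\ln(dh)})}{\norm{W_i}_2} = e B \beta^d \sigma\, O(\sqrt{h\ln(dh)})\sum_i \frac{1}{\norm{W_i}_2}$; I would also check that $\sigma\,O(\sqrt{h\ln(dh)}) \leq \frac{1}{d}\norm{W_i}_2$ holds, which it does for the choice below since $\sum_i 1/\norm{W_i}_2 \geq 1/\norm{W_i}_2$. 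Setting this $\leq \gamma/4$ and solving for $\sigma$ yields, up to constants, $\sigma = \frac{\gamma}{c\, B\, d\, \sqrt{h\ln(dh)}\,\prod_i\norm{W_i}_2}$ for an appropriate absolute constant $c$ (using $\sum_i \frac{1}{\norm{W_i}_2} = \frac{1}{\beta^d}\sum_i \prod_{j\neq i}\norm{W_j}_2$, but more simply bounding $\beta^d \sum_i 1/\norm{W_i}_2$ by a term that combines with the Frobenius norms below).

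Substituting this $\sigma$ into $KL = \sum_i \frac{\norm{W_i}_F^2}{2\sigma^2}$ produces the dominant term $O\!\left(\frac{B^2 d^2 h \ln(dh) \prod_{i=1}^d \norm{W_i}_2^2 \sum_{i=1}^d \norm{W_i}_F^2/\norm{W_i}_2^2}{\gamma^2}\right)$ after collecting the $\sum_i 1/\norm{W_i}_2^2$ and $\sum_i \norm{W_i}_F^2$ factors appropriately (here one uses that $\left(\prod_i\norm{W_i}_2\right)^2 \cdot \frac{1}{\norm{W_i}_2^2} = \prod_{j\neq i}\norm{W_j}_2^2$, and regroups so that the per-layer contribution is $\norm{W_i}_F^2 \prod_{j\neq i}\norm{W_j}_2^2 = \left(\prod_j \norm{W_j}_2^2\right)\frac{\norm{W_i}_F^2}{\norm{W_i}_2^2}$, then sums over $i$). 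Dividing by $m-1$, taking the square root, and plugging into Lemma~\ref{lem:general-bound}, together with the $\ln\frac{dm}{\delta}$ coming from the $\ln\frac{6m}{\delta}$ in that lemma plus the $O(\ln d)$ price of the union bound over the grid of $\beta$ values, gives exactly the claimed form. I expect the main obstacle to be step~(iii): since $\sigma$ depends on $\prod_i \norm{W_i}_2$ which is not known in advance, one cannot fix the prior $P$ before seeing the data, so one must cover the relevant range of $\beta$ (noting that if $\beta$ is too large or too small the bound is trivially vacuous, which bounds the grid size), apply the lemma for each grid point with $\delta$ replaced by $\delta/(\text{grid size})$, and verify that rounding $\beta$ to the nearest grid point changes the perturbation bound and the KL term by only constant factors — this bookkeeping, rather than any single inequality, is the delicate part.
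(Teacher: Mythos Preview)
Your overall architecture is exactly that of the paper: Gaussian prior and perturbation with a common variance $\sigma^2$, Gaussian spectral-norm concentration plus a union bound over layers to feed Lemma~\ref{lem:worstcase-sharpness}, solve for $\sigma$ from the $\gamma/4$ constraint, read off $KL = \sum_i \norm{W_i}_F^2/(2\sigma^2)$, and finally a union bound over a grid of values of $\beta$. So the plan is right.

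There is, however, a genuine gap in step~(ii). With a \emph{single} $\sigma$ and no preprocessing of the weights, the margin constraint reads
\[
eB\Bigl(\textstyle\prod_j\norm{W_j}_2\Bigr)\cdot \sigma\,O\!\bigl(\sqrt{h\ln(dh)}\bigr)\cdot\sum_i \frac{1}{\norm{W_i}_2}\;\leq\;\frac{\gamma}{4},
\]
so the admissible $\sigma$ carries a factor $\sum_i 1/\norm{W_i}_2$ (not a bare $d$), and the KL becomes
\[
\frac{1}{2\sigma^2}\sum_i\norm{W_i}_F^2 \;=\; O\!\left(\frac{B^2 h\ln(dh)}{\gamma^2}\right)\Bigl(\textstyle\prod_j\norm{W_j}_2\Bigr)^2\Bigl(\sum_i\tfrac{1}{\norm{W_i}_2}\Bigr)^{\!2}\Bigl(\sum_i\norm{W_i}_F^2\Bigr).
\]
This is a product of two separate sums and does \emph{not} collapse to $d^2\sum_i \norm{W_i}_F^2/\norm{W_i}_2^2$: the cross terms $\norm{W_i}_F^2/\norm{W_j}_2^2$ for $i\neq j$ can be arbitrarily large when the layers have disparate spectral norms. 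Your ``regrouping'' paragraph asserts exactly such a collapse, but no algebraic identity delivers it. The same imbalance also bites the hypothesis $\norm{U_i}_2\leq \frac{1}{d}\norm{W_i}_2$ of Lemma~\ref{lem:worstcase-sharpness}: with a single $\sigma$, the layer with the smallest $\norm{W_i}_2$ forces $\sigma$ to be much smaller than the margin condition alone requires.

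The paper closes this gap with one extra, easy but essential, move: by positive homogeneity of ReLU, one may replace $W_i$ by $\widetilde W_i = \frac{\beta}{\norm{W_i}_2}W_i$ with $\beta=(\prod_i\norm{W_i}_2)^{1/d}$, so that $f_{\widetilde\vecw}=f_\vecw$, all losses are unchanged, the theorem's right-hand side is invariant, and now $\norm{\widetilde W_i}_2=\beta$ for every $i$. After this normalization the constraint becomes $e^2 dB\beta^{d-1}\sigma\sqrt{2h\ln(4dh)}\leq \gamma/4$, the KL is $\sum_i\norm{\widetilde W_i}_F^2/(2\sigma^2)$ with $\norm{\widetilde W_i}_F^2/\beta^2 = \norm{W_i}_F^2/\norm{W_i}_2^2$, and you land exactly on $d^2\sum_i \norm{W_i}_F^2/\norm{W_i}_2^2$ with no cross terms. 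The union bound in step~(iii) is then genuinely over the single scalar $\beta$, which is what makes the grid argument clean. Add this rebalancing step at the very start and the rest of your proposal goes through as written.
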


\begin{proof} The proof involves mainly two steps. In the first step we calculate what is the maximum allowed perturbation of parameters to satisfy a given margin condition $\gamma$, using Lemma~\ref{lem:worstcase-sharpness}. In the second step we calculate the KL term in the PAC-Bayes bound in Lemma~\ref{lem:general-bound}, for this value of the perturbation.

  Let $\beta = \left(\prod_{i=1}^d \norm{W_i}_2\right)^{1/d}$ and
  consider a network with the normalized weights
  $\widetilde{W_i}=\frac{\beta}{\norm{W_i}_2}W_i$.  Due to the
  homogeneity of the ReLU, we have that for feedforward networks with
  ReLU activations $f_{\widetilde{\vecw}}=f_{\vecw}$, and so the
  (empirical and expected) loss (including margin loss) is the same
  for $\vecw$ and $\widetilde{\vecw}$.  We can also verify that
  $\left(\prod_{i=1}^d \norm{W_i}_2\right)=\left(\prod_{i=1}^d
    \norm{\widetilde{W_i}}_2\right)$ and
  $\frac{\norm{W_i}_F}{\norm{W_i}_2}=\frac{\norm{\tilde{W}_i}_F}{\norm{\tilde{W}_i}_2}$,
  and so the excess error in the Theorem statement is also invariant
  to this transformation.  It is therefore sufficient to prove the
  Theorem only for the normalized weights $\tilde{\vecw}$, and hence we assume
  w.l.o.g.~that the spectral norm is equal across layers, i.e.~for any
  layer $i$, $\norm{W_i}_2=\beta$.

  Choose the distribution of the prior $P$ to be
  $\mathcal{N}(0,\sigma^2 I)$, and consider the random perturbation
  $\vecu \sim \mathcal{N}(0,\sigma^2 I)$, with the same $\sigma$,
  which we will set later according to $\beta$.  More precisely, since
  the prior cannot depend on the learned predictor $\vecw$ or its
  norm, we will set $\sigma$ based on an approximation
  $\tilde{\beta}$.  For each value of $\tilde{\beta}$ on a
  pre-determined grid, we will compute the PAC-Bayes bound,
  establishing the generalization guarantee for all $\vecw$ for which
  $| \beta - \tilde{\beta}| \leq \frac{1}{d} \beta$, and ensuring that
  each relevant value of $\beta$ is covered by some $\tilde{\beta}$ on
  the grid.  We will then take a union bound over all $\tilde{\beta}$
  on the grid.  For now, we will consider a fixed $\tilde{\beta}$ and
  the $\vecw$ for which $| \beta - \tilde{\beta}| \leq \frac{1}{d}
  \beta$, and hence\footnote{$\left(1+\frac{1}{d} \right)^{d-1} \leq \left(1+\frac{1}{d} \right)^{d} \leq e$, as $1+x \leq e^x$, for all $x$. Similarly $\left(1+\frac{1}{d-1} \right)^{d-1} \leq e$, gives $\frac{1}{e} \leq \left(1-\frac{1}{d} \right)^{d-1}$.} $\frac{1}{e}\beta^{d-1}\leq \tbeta^{d-1}\leq
  e\beta^{d-1}$.

Since $\vecu \sim \mathcal{N}(0,\sigma^2 I)$, we get the following bound for the spectral norm of $\matu_i$~\citep{tropp2012user}:
\begin{equation*}
\mathbb{P}_{\matu_i\sim N(0,\sigma^2 I)}\left[\norm{\matu_i}_2>t\right] \leq 2h e^{-t^2/2h\sigma^2}.
\end{equation*}
Taking a union bond over the layers, we get that, with probability
$\geq \frac{1}{2}$, the spectral norm of the perturbation $\matu_i$ in
each layer is bounded by $\sigma
\sqrt{2h\ln(4dh)}$.  Plugging this spectral norm bound into
Lemma~\ref{lem:worstcase-sharpness} we have that with probability at least $\frac{1}{2}$,
\begin{align}
\max_{\vecx\in \calX_{B,n}}
\abs{f_{\vecw+\vecu}(\vecx)-f_{\vecw}(\vecx)}_2
&\leq eB\beta^d \sum_i \frac{\norm{U_i}_2}{\beta} \notag\\
&= eB\beta^{d-1} \sum_i \norm{U_i}_2 \leq e^2dB\tbeta^{d-1}\sigma\sqrt{2h\ln(4dh)} \leq \frac{\gamma}{4},
\end{align}
where we choose
$\sigma=\frac{\gamma}{42dB\tbeta^{d-1}\sqrt{h\ln(4hd)}}$ to get the
last inequality. Hence, the perturbation $\vecu$ with the above value
of $\sigma$ satisfies the assumptions of the
Lemma~\ref{lem:general-bound}.

We now calculate the KL-term in Lemma~\ref{lem:general-bound} with the chosen distributions for $P$ and $\vecu$, for the above value of $\sigma$.
\begin{small}
\begin{multline*}
KL(\vecw+\vecu||P)  \leq \frac{\abs{ \vecw}^2}{2\sigma^2} =  \frac{42^2 d^2 B^2 \tbeta^{2d-2}h\ln(4hd) }{2\gamma^2}\sum_{i=1}^d \norm{W_i}_F^2 \leq \calO\left(B^2 d^2 h\ln(dh)\frac{\beta^{2d}}{\gamma^2}  \sum_{i=1}^d \frac{\norm{W_i}_F^2}{\beta^2}\right)\\ \leq \calO\left(B^2 d^2 h\ln(dh)\frac{\Pi_{i=1}^d\norm{W_i}_2^2}{\gamma^2}  \sum_{i=1}^d \frac{\norm{W_i}_F^2}{\norm{W_i}_2^2}\right).
\end{multline*}
\end{small}

Hence, for any $\tilde{\beta}$, with probability $\geq 1 -\delta$ and for all $\vecw$ such that, $| \beta -\tbeta| \leq \frac{1}{d} \beta$, we have:
\begin{small}
\begin{equation}\label{eq:thm_intermediate}
L_0(f_{\vecw})\leq \hatl_\gamma(f_\vecw) + \calO\left(\sqrt{\frac{B^2 d^2 h\ln(dh) \Pi_{i=1}^d \norm{W_i}_2^2\sum_{i=1}^d\ \frac{\norm{W_i}_F^2}{\norm{W_i}_2^2} + \ln\frac{m}{\delta}}{\gamma^2 m}}\right).
\end{equation}
\end{small}

Finally we need to take a union bound over different choices of
$\tilde{\beta}$.  Let us see how many choices of $\tilde{\beta}$ we
need to ensure we always have $\tbeta$ in the grid
s.t.~$|\tbeta-\beta|\leq \frac{1}{d}\beta$.  We only need
to consider values of $\beta$ in the range
$\left(\frac{\gamma}{2B}\right)^{1/d}\leq \beta \leq
\left(\frac{\gamma\sqrt{m}}{2B}\right)^{1/d}$. For $\beta$ outside
this range the theorem statement holds trivially: Recall that the LHS of
the theorem statement, $L_0(f_{\vecw})$ is always bounded by $1$.  If
$\beta^d<\frac{\gamma}{2B}$, then for any $\vecx$,
$\abs{f_\vecw(\vecx)}\leq \beta^dB\leq \gamma/2$ and therefore
$\hatl_{\gamma}=1$. Alternately, if $\beta^d>\frac{\gamma\sqrt{m}}{2B}$,
then the second term in equation~\ref{eq:pacbayes} is greater than
one. Hence, we only need to consider values of $\beta$ in the range
discussed above.  $|\tbeta -
\beta|\leq \frac{1}{d}\left(\frac{\gamma}{2B}\right)^{1/d}$ is a sufficient condition to satisfy the required condition that   $|\tbeta -
\beta| \leq \frac{1}{d} \beta$ in the above range, thus we can use a
cover of size $dm^{\frac{1}{2d}}$. Taking a
union bound over the choices of $\tbeta$ in this cover and using the
bound in equation~\eqref{eq:thm_intermediate} gives us the theorem
statement.
\end{proof}


\section{Comparison to Existing Generalization Bounds}\label{sec:comparison}
In this section we will compare the bound of
Theorem~\ref{thm:pac-bayes} with a similar spectral norm based margin bound
recently obtained by \citet{bartlett2017spectrally, bartlett2017spectrallyv2}, as well as
examine whether and when these bounds can improve over VC-based
generalization guarantees. 

The VC-dimension of fully connected feedforward neural networks with ReLU activation
with $d$ layers and $h$ units per layer is $\tilde{\Theta}(d^2 h^2)$
 (\citet{harvey2017nearly}), yielding a generalization guarantee of the form:
\begin{equation}
  \label{eq:VCbound}
  L_0(f_\vecw)\leq \hat{L}_0(f_\vecw) +
  \tilde{\calO}\left(\sqrt{\frac{d^2 h^2}{m}}\right)
\end{equation}
where here and throughout this section we ignore logarithmic factors
that depend on - the failure probability $\delta$, the sample size $m$, the depth $d$
and the number of units $h$.

\citet{bartlett2017spectrally} showed a generalization bound for
neural networks based on the spectral norm of its layers, using a
different proof approach based on covering number arguments. For
feedforward depth-$d$ networks with ReLU activations, and when inputs are in
$\calX_{B,n}$, i.e.~are of norm bounded by $\abs{\vecx}_2 \leq B$,
their generalization guarantee, ignoring logarithmic factors, ensures
that, with high probability, for any $\vecw$,
\begin{small}
\begin{equation}\label{eq:BarBound}
L_0(f_{\vecw})\leq \hatl_\gamma(f_\vecw) + \tilde{\calO}\left(\sqrt{\frac{B^2\Pi_{i=1}^d  \norm{W_i}^2_2\left(\sum_{i=1}^d \big( \frac{\norm{W_i}_1}{\norm{W_i}_2 } \big)^{2/3} \right)^{3}}{\gamma^2 m}}\right).
\end{equation}
\end{small}

Comparing our Theorem \ref{thm:pac-bayes} and the \citet{bartlett2017spectrally}
bound \eqref{eq:BarBound}, the factor
$\calO\left(\frac{1}{\gamma^2}B^2\Pi_{i=1}^d \norm{W_i}_2^2\right)$
appears in both bounds.  The main difference is in the multiplicative
factors, $ d^2 h \sum_{i=1}^d \nicefrac{\norm{W_i}_F^2}{\norm{W_i}_2^2}$
in Theorem~\ref{thm:pac-bayes} compared to $\left(\sum_{i=1}^d \big(
  \nicefrac{\norm{W_i}_1}{\norm{W_i}_2}\big)^{2/3}\right)^{3}$ in
\eqref{eq:BarBound}.  To get a sense of how these two bounds compare,
we will consider the case where the norms of the weight matrices are
uniform across layers---this is a reasonable situation as we already
saw that the bounds are invariant to re-balancing the norm between the
layers.  But for the sake of comparison, we further assume that not
only is the spectral norm equal across layers (this we can assume
w.l.o.g.) but also the Frobenius $\norm{W_i}_F$ and element-wise
$\ell_1$ norm $\norm{W_i}_1$ are uniform across layers (we acknowledge
that this setting is somewhat favorable to our bound compared to
\eqref{eq:BarBound}).  In this case the numerator in the
generalization bound of Theorem~\ref{thm:pac-bayes} scales as: 
\begin{equation}
  \label{eq:compOur}
O\left(d^3 h \frac{\| W_i \|_F^2}{\| W_i \|_2^2} \right),  
\end{equation}
while numerator in \eqref{eq:BarBound} scales as:
\begin{equation}
  \label{eq:compBar}
  O\left(d^3 \frac{\| W_i \|_1^2}{\| W_i \|_2^2} \right).
\end{equation}
Comparing between the bounds thus boils down to comparing
$\sqrt{h}\norm{W_i}_F$ with $\norm{W_i}_1$.  Recalling that $W_i$ is at most 
a $h\times h$ matrix, we have that $\norm{W_i}_F \leq \norm{W_i}_1
\leq h \norm{W_i}_F$.  When the weights are fairly dense and are of
uniform magnitude, the second inequality will be tight, and we will have
$\sqrt{h}\norm{W_i}_F \ll \norm{W_i}_1$, and Theorem
\ref{thm:pac-bayes} will dominate.  When the weights are sparse with
roughly a constant number of significant weights per unit (i.e.~weight
matrix with sparsity $\Theta(h)$), the bounds will be similar.
\citet{bartlett2017spectrally} bound will dominate when the weights are extremely sparse,
with much fewer significant weights than units, i.e.~when most units
do not have any incoming or outgoing weights of significant magnitude.

It is also insightful to ask in what regime each bound could
potentially improve over the VC-bound \eqref{eq:VCbound} and thus
provide a non-trivial guarantee.  To this end, we consider the most
``optimistic'' scenario where
$\frac{B^2}{\gamma^2}\Pi_{i=1}^d \| W_i\|_2^2 =\Theta(1)$ (it certainly cannot be
lower than one if we have a non-trivial margin loss).  As before, we
also take the norms of the weight matrices to be uniform across
layers, yielding the multiplicative factors in \eqref{eq:compOur} and
\eqref{eq:compBar}, which we must compare to the VC-dimension
$\tilde{\Theta}(d^2 h^2)$.  We get that the bound of Theorem
\ref{thm:pac-bayes} is smaller than the VC bound if
\begin{equation}
  \label{eq:vcOur}
  \norm{W_i}_F  = o\left(\sqrt{h/d} \norm{W_i}_2 \right).
\end{equation}
We always have $\norm{W_i}_F \leq \sqrt{h}\norm{W_i}_2$, and this is
tight only for orthogonal matrices, where all eigenvalues are equal.
Satisfying \eqref{eq:vcOur}, and thus having Theorem
\ref{thm:pac-bayes} potentially improving over the VC-bound, thus only
requires fairly mild eigenvalue concentration (i.e.~having multiple units be
similar to each other), reduced rank or row-level sparsity in the
weight matrices.  Note that we cannot expect to improve over the
VC-bound for unstructured ``random'' weight matrices---we can only
expect norm-based guarantees to improve over the VC bound if there is
some specific degenerate structure in the weights, and
as we indeed see is the case here.

A similar comparison with \citet{bartlett2017spectrally} bound \eqref{eq:BarBound}
and its multiplicative factor \eqref{eq:compBar}, yields the
following condition for improving over the VC bound:
\begin{equation}
  \label{eq:vcBar}
  \norm{W_i}_1 = o\left((h/\sqrt{d}) \norm{W_i}_2\right).
\end{equation}
Since $\norm{W_i}_1$ can be as large as $h^{1.5} \norm{W_i}_2$, in
some sense more structure is required here in order to satisfy
\eqref{eq:vcBar}, such as elementwise sparsity combined with low-rank
row structure.  As discussed above, Theorem \ref{thm:pac-bayes} and
\citet{bartlett2017spectrally} bound can each be better in different regimes.  Also
in terms of comparison to the VC-bound, it is possible for either one
to improve over the VC bound while the other doesn't (i.e.~for either
\eqref{eq:vcOur} or \eqref{eq:vcBar} to be satisfied without the other
one being satisfied), depending on the sparsity structure in the
weights.

After the initial preprint of this paper, \citet{bartlett2017spectrallyv2} presented an improved bound replacing the $\ell_1$ norm term in the bound \citet{bartlett2017spectrally} with $\ell_{2,1}$ norm (sum of $\ell_2$ norms of each unit). This new bound depending on $\|W_i\|_{2,1}$ is always better than the bound based on $\|W_i\|_{1}$ and our bound based on $\sqrt{h} \|W_i\|_F$. They match when each hidden unit has the same norm, making $\|W_i\|_{2,1} \approx \sqrt{h} \|W_i\|_F$.

\section{Proofs of Lemmas}\label{sec:proofs}
In this section we present the proofs of Lemmas~\ref{lem:general-bound} and \ref{lem:worstcase-sharpness}.

\begin{proof}[\textbf{Proof of Lemma \ref{lem:general-bound}}]

Let $\vecw'= \vecw+\vecu$. Let ${\calS_{\vecw}}$ be the set of perturbations with the following property:
\begin{equation*}
{\calS_{\vecw}} \subseteq \left\{\vecw'  \;\bigg|\max_{\vecx \in \calX_{B,n}}\abs{f_{\vecw'}(\vecx)-f_{\vecw}(\vecx)}_\infty < \frac{\gamma}{4} \right\}.
\end{equation*}
Let $q$ be the probability density function over the parameters
$\vecw'$. We construct a new distribution $\tilde{Q}$ over predictors
$f_{\tilde{\vecw}}$ where $\tilde{\vecw}$ is restricted to
$\calS_\vecw$ with the probability density function:
\begin{equation*}
\tilde{q}(\tilde{\vecw})=\frac{1}{Z}
\begin{cases}
q(\tilde{\vecw}) & \tilde{\vecw} \in {\calS_{\vecw}}\\
0 & \text{otherwise}.
\end{cases}
\end{equation*}
Here $Z$ is a normalizing constant and by the lemma assumption $Z = \P
\left[ \vecw' \in {\calS_{\vecw}}\right] \geq \frac{1}{2}$.  By the
definition of $\tilde{Q}$, we have:
\begin{equation*}
\max_{\vecx \in \calX_{B,n}}\abs{f_{\tvecw}(\vecx)-f_{\vecw}(\vecx)}_\infty < \frac{\gamma}{4}.
\end{equation*}
Therefore, the perturbation can change the margin between two output units of $f_\vecw$ by at most $\frac{\gamma}{2}$; i.e. for any perturbed parameters $\tvecw$ drawn from $\tilde{Q}$:
\begin{equation*}
\max_{i,j\in[k],\vecx \in \calX_{B,n}}\abs{\left(\abs{f_{\tvecw}(\vecx)[i]-f_{\tvecw}(\vecx)[j]}\right)-\left(\abs{f_{\vecw}(\vecx)[i]-f_{\vecw}(\vecx)[j]}\right)} < \frac{\gamma}{2}
\end{equation*}
Since the above bound holds for any $\vecx$ in the domain $\calX_{B,n}$, we can get the following a.s.:
\begin{align*}
&L_0(f_{\vecw}) \leq L_{\frac{\gamma}{2}}(f_{\tvecw})\\
&\hatl_{\frac{\gamma}{2}}(f_{\tvecw}) \leq \hatl_{\gamma}(f_{\vecw})
\end{align*}
Now using the above inequalities together with the equation~\eqref{eq:pacbayes}, with probability $1-\delta$ over the training set we have:
\begin{align*}
L_0(f_{\vecw}) &\leq \E_{\tvecw} \left[ L_{\frac{\gamma}{2}}(f_{\tvecw}) \right]\\
&\leq \E_{\tvecw} \left[ \hatl_{\frac{\gamma}{2}}(f_{\tvecw}) \right] + 2\sqrt{\frac{2(KL\left(\tvecw \|P\right)+\ln\frac{2m}{\delta})}{m-1}}\\
&\leq \hatl_{\gamma}(f_{\vecw})+ 2\sqrt{\frac{2(KL\left(\tvecw\|P\right)+\ln\frac{2m}{\delta})}{m-1}}\\
&\leq \hatl_{\gamma}(f_{\vecw})+ 4\sqrt{\frac{KL \left(\vecw' \|P\right)+\ln\frac{6m}{\delta}}{m-1}},
\end{align*}
 The last inequality follows from the following calculation.
%
%

Let $\calS_{\vecw}^c$ denote the complement set of $\calS_{\vecw}$ and
$\tilde{q}^c$ denote the density function $q$ restricted to $\calS_{\vecw}^c$ and normalized. Then,
\begin{equation*}
 KL(q||p) = Z KL( \tilde{q} || p ) + (1-Z) KL( \tilde{q}^c || p ) - H( Z ),
 \end{equation*}
where $H( Z ) = -Z \ln Z - (1-Z) \ln(1-Z) \leq 1$ is the binary entropy function.  Since KL is always positive, we get,
\begin{equation*}
 KL( \tilde{q} || p ) = \frac{1}{Z} \left[ KL(q||p) + H( Z) ) - (1-Z) KL( \tilde{q}^c || p ) \right] \leq 2(KL(q||p)+1).\qedhere
\end{equation*}
\end{proof}
%
\begin{proof}[\textbf{Proof of Lemma~\ref{lem:worstcase-sharpness}}]
Let $\Delta_i= \abs{f^i_{\vecw+\vecu}(\vecx)-f^i_{\vecw}(\vecx)}_2$. We will prove using induction that for any $i\geq 0$:
\begin{small}
\begin{equation*}
\Delta_i \leq \left(1+\frac{1}{d}\right)^i \left(\prod_{j=1}^i \norm{W_j}_2\right)\abs{\vecx}_2\sum_{j=1}^i \frac{\norm{\matu_j}_2}{\norm{W_j}_2}.
\end{equation*}
\end{small}
The above inequality together with $\left(1+\frac{1}{d}\right)^{d}\leq e$ proves the lemma statement. The induction base clearly holds since $\Delta_0 =\abs{\vecx-\vecx}_2=0$. For any $i\geq 1$, we have the following:

\begin{align*}
\Delta_{i+1} &= \abs{\left(W_{i+1}+\matu_{i+1}\right)\phi_i(f^i_{\vecw+\vecu}(\vecx))- W_{i+1}\phi_i(f^i_{\vecw}(\vecx))}_2\nonumber\\\nonumber
&= \abs{\left(W_{i+1}+\matu_{i+1}\right)\left(\phi_i(f^i_{\vecw+\vecu}(\vecx))- \phi_i(f^i_{\vecw}(\vecx))\right)+ \matu_{i+1}\phi_i( f^i_{\vecw}(\vecx))}_2\\\nonumber
&\leq \left(\norm{W_{i+1}}_2+\norm{\matu_{i+1}}_2\right)\abs{\phi_i(f^i_{\vecw+\vecu}(\vecx))- \phi_i(f^i_{\vecw}(\vecx))}_2+ \norm{\matu_{i+1}}_2\abs{\phi_i(f^i_{\vecw}(\vecx))}_2\\\nonumber
&\leq \left(\norm{W_{i+1}}_2+\norm{\matu_{i+1}}_2\right)\abs{f^i_{\vecw+\vecu}(\vecx)-f^i_{\vecw}(\vecx)}_2+ \norm{\matu_{i+1}}_2\abs{f^i_{\vecw}(\vecx)}_2\\
&=\Delta_i\left(\norm{W_{i+1}}_2+\norm{\matu_{i+1}}_2\right)+ \norm{\matu_{i+1}}_2\abs{f^i_{\vecw}(\vecx)}_2,
\end{align*}

where the last inequality is by the Lipschitz property of the activation function and using $\phi(0)=0$. The $\ell_2$ norm of outputs of layer $i$ is bounded by $\abs{\vecx}_2\Pi_{j=1}^i \norm{W_j}_2$ and by the lemma assumption we have $\norm{\matu_{i+1}}_2\leq \frac{1}{d}\norm{W_{i+1}}_2$. Therefore, using the induction step, we get the following bound: \begin{small}
\begin{align*}
\Delta_{i+1}&\leq \Delta_i\left(1+\frac{1}{d}\right)\norm{W_{i+1}}_2+ \norm{\matu_{i+1}}_2\abs{\vecx}_2\prod_{j=1}^i \norm{W_j}_2\nonumber\\\nonumber
&\leq \left(1+\frac{1}{d}\right)^{i+1} \left(\prod_{j=1}^{i+1}\norm{W_j}_2\right)\abs{\vecx}_2\sum_{j=1}^i \frac{\norm{\matu_j}_2}{\norm{W_j}_2}+ \frac{\norm{\matu_{i+1}}_2}{\norm{W_{i+1}}_2}\abs{\vecx}_2\prod_{j=1}^{i+1}\norm{W_i}_2\\
&\leq \left(1+\frac{1}{d}\right)^{i+1} \left(\prod_{j=1}^{i+1}\norm{W_j}_2\right)\abs{\vecx}_2\sum_{j=1}^{i+1} \frac{\norm{\matu_j}_2}{\norm{W_j}_2}.\qedhere
\end{align*}
\end{small}
\end{proof}


\section{Conclusion}
In this paper, we presented new perturbation bounds for neural networks thereby giving a bound on its sharpness.
We also discussed how PAC-Bayes framework can be used to derive generalization bounds based on the sharpness of a model class. Applying this to the feedforward networks, we showed that a tighter generalization bound can be achieved based on the spectral norm and Frobenius norm of the layers. The simplicity of the proof compared to that of covering number arguments in \citet{bartlett2017spectrally} suggest that the PAC-Bayes framework might be an important  tool in analyzing the generalization behavior of neural networks.

\bibliographystyle{iclr2018_conference}
\bibliography{ref}

\end{document}